\documentclass{article}

\usepackage[preprint]{neurips_2026}

\usepackage[utf8]{inputenc}
\usepackage[T1]{fontenc}
\usepackage{microtype}
\usepackage{url}
\usepackage{booktabs}
\usepackage{amsmath,amssymb,amsfonts,amsthm,mathtools}
\usepackage{graphicx}
\usepackage{xcolor}
\usepackage{tikz}
\usepackage{enumitem}
\usepackage{hyperref}

\title{Topological Neural Tangent Kernel}

\author{%
  Sanjukta Krishnagopal \\
  Department of Computer Science, University of California Santa Barbara \\
  \texttt{sanjukta@ucsb.edu}
}

\newtheorem{theorem}{Theorem}
\newtheorem{proposition}{Proposition}

\newtheorem{definition}{Definition}

\begin{document}

\maketitle

\begin{abstract}
Graph neural tangent kernels give a principled infinite-width theory for graph
neural networks, but inherit a basic limitation of graph models: they use only
pairwise structure unless higher-order information is explicitly encoded in the
features. Many relational systems contain higher-order interactions that are
more naturally represented by simplicial complexes. We introduce TopoNTK, the
infinite-width kernel associated with Hodge message-passing on edge
features. TopoNTK combines lower Hodge interactions, capturing graph-like
coupling through shared vertices, with upper Hodge interactions, capturing
coupling through filled simplices. This makes the kernel sensitive to
filled-simplex structure invisible to graph-only kernels, allowing complexes
with the same graph and features but different filled simplices to induce
different edge-level kernels whenever the upper Hodge
operator changes.

Beyond expressivity, the Hodge structure gives an interpretable learning
geometry. Edge signals decompose into gradient-like, harmonic, and local
circulation components. The propagation operator exactly preserves the Hodge
decomposition; for nonlinear TopoNTKs, exact kernel-level invariance requires
explicit compatibility conditions, while the standard ReLU recursion gives a
Hodge-aligned architectural bias. Kernel spectra then diagnose which projected
topological components are learned quickly or slowly. We prove an expressivity separation,
propagation-level Hodge preservation, conditional kernel compatibility, spectral
learning, and finite-depth stability, and support these claims on synthetic
simplicial tasks and DBLP higher-order link prediction.
\end{abstract}

\section{Introduction}

Graph neural networks and their infinite-width limits, neural tangent kernels (NTKs), provide a principled framework for learning on relational data \cite{jacot2018ntk,du2019graph,krishnagopal2023gntk}. 
These methods are built around pairwise relations, where vertices represent objects and edges represent interactions. 
However, many complex systems involve higher-order relations among groups of entities, including multi-person social groups, biological assemblies, neural co-activity patterns, and multi-agent interactions \cite{battiston2020networks,krishnagopal2021spectral,krishnagopal2022mountaineering}. 
In such settings, the distinction between an unfilled cycle and a filled triangle (representing a single three-way simultaneous interaction) is not merely a modeling detail: it changes the topology, the geometry of edge signals, and the relevant notion of similarity.

Simplicial complexes provide a natural representation for this structure by extending graphs with higher-dimensional faces while retaining the graph as the $1$-skeleton. Here, a \(0\)-simplex is a node, a \(1\)-simplex is an edge, a \(2\)-simplex is a filled triangle, and more generally a \(k\)-simplex represents an interaction among \(k+1\) nodes. Thus, graphs are the simplest case containing only \(0\)- and \(1\)-simplices.
Simplicial complexes also support a Hodge-theoretic decomposition of edge signals into exact, harmonic, and coexact components, corresponding to gradient-like flows, cycle flows modulo filled boundaries, and local circulations around filled simplices (see Figure~\ref{fig:hodge_decomposition}). 
Hodge Laplacians and their lower--upper decomposition are standard tools for analyzing such edge-supported signals \cite{horak2013spectra,lim2020hodge,schaub2020random} with applications in many real-world social and biological systems \cite{su2024hodge}. 

Recent work in topological deep learning frames simplicial complexes, cell complexes,
and related domains as natural extensions of graph representation learning
\cite{hajij2022topological,papamarkou2024topological},  using lower and upper adjacency or Hodge-based message passing \cite{ebli2020simplicial,bodnar2021weisfeiler,yang2022simplicialfilters}. 
These works show that higher-order structure can provide information unavailable to graph-only models. 
However, the corresponding kernel perspective remains undeveloped. 
This raises a natural question: can the NTK framework be extended to simplicial complexes in a way that captures higher-order topology and provides insight on the learning dynamics?

We introduce the \emph{Topological Neural Tangent Kernel} (TopoNTK), the infinite-width kernel associated with a simplified Hodge message-passing architecture on edge features. 
TopoNTK propagates information through two complementary Hodge channels: a lower channel induced by shared vertices and an upper channel induced by shared filled simplices. 
The lower channel recovers graph-like interactions on the $1$-skeleton, while the upper channel captures filled-simplex structure invisible to graph kernels using only the same graph and input features. Thus, TopoNTK can distinguish simplicial complexes with identical underlying graphs but different triangle sets.

Our main observation is that this Hodge-theoretic structure gives both expressivity and interpretability. 
Graph NTKs, and lower-only topological kernels, cannot distinguish complexes with the same $1$-skeleton and identical features but different triangle sets. 
By contrast, the full TopoNTK depends on both lower and upper Hodge Laplacians and induces learning behavior aligned with exact, harmonic, and coexact edge-signal components. 
This yields a topologically structured spectral diagnostic with finite-depth Lipschitz stability under upper-Laplacian perturbations, useful when signals depend on group-level structure, cycles, or circulations rather than pairwise connectivity alone.

\textbf{Contributions} We make four contributions:
(i) we define TopoNTK, an infinite-width kernel associated with Hodge message passing
on edge features;
(ii) we show that upper Hodge propagation can capture filled-simplex structure
invisible to graph NTKs and lower-only variants using the same graph and features;
(iii) we prove exact Hodge preservation of propagation, give sufficient conditions for kernel-level Hodge compatibility, derive spectral learning dynamics, and prove finite-depth stability under upper-Laplacian perturbations; and
(iv) we empirically probe these predictions on synthetic and real data on tasks such as triangle-count sensitivity, Hodge component
recovery, and DBLP higher-order link
prediction.

\section{Background}

\subsection{Simplicial complexes and Hodge theory}

Let $X$ be an oriented simplicial complex with vertex set $V$, edge set $E$, and triangle set $T$. 
While a graph records only pairwise relations, a simplicial complex also records higher-order relations: a filled triangle $\{i,j,k\}\in T$ represents a simultaneous three-way interaction rather than merely the three pairwise edges among its vertices. 
This distinction is central: two complexes may have the same $1$-skeleton but different triangle sets, and hence different higher-order topology, that isn't captured when they are represented by their graph skeleton.

We focus on edge signals, or real-valued $1$-cochains,
\[
x \in C^1(X) \cong \mathbb{R}^{|E|}.
\]
We identify finite-dimensional chains and cochains with their Euclidean coordinate vectors. Let \(B_1\in\mathbb R^{|V|\times |E|}\) map oriented edges to vertices and \(B_2\in\mathbb R^{|E|\times |T|}\) map oriented triangles to edges. They satisfy the chain-complex identity
\[
B_1 B_2 = 0,
\]
which algebraically expresses that the boundary of a boundary is zero.

The Hodge $1$-Laplacian is
\[
L_1 = B_1^\top B_1 + B_2 B_2^\top.
\]
We write
\[
L_{\downarrow} := B_1^\top B_1,
\qquad
L_{\uparrow} := B_2 B_2^\top.
\]
The lower Laplacian $L_{\downarrow}$ couples edges that share vertices and is determined by the graph $1$-skeleton. 
The upper Laplacian $L_{\uparrow}$ couples edges that co-bound filled triangles and therefore depends on the higher-order simplicial structure. These operators are standard in combinatorial
Hodge theory and simplicial signal processing
\cite{horak2013spectra,lim2020hodge,schaub2020random,barbarossa2020topological}.

The Hodge decomposition gives the orthogonal splitting
\[
C^1(X)=\operatorname{im}B_1^\top \oplus \ker L_1 \oplus \operatorname{im}B_2 .
\]
We call these the exact, harmonic, and coexact subspaces. Exact components are gradient-like edge flows induced by vertex potentials, harmonic components are global cycle flows, and coexact components are local circulations around filled triangles. This decomposition is the natural coordinate system for topological learning on edge signals (see Figure~\ref{fig:hodge_decomposition}).
This interpretation underlies a large body of work on edge flows, circulation,
and higher-order signal processing on simplicial complexes
\cite{schaub2020random,lim2020hodge}.

More broadly, higher-order network methods support signals defined not only on nodes and edges but also on higher-dimensional \(k\)-simplices through the \(k\)-th Hodge Laplacian~\cite{schaub2021signal,bick2023higher}.

\subsection{Neural tangent kernels}
The neural tangent kernel (NTK) describes the infinite-width training dynamics of neural networks under gradient descent~\cite{jacot2018ntk}. For a network \(f_\theta\), the NTK is
\[
K(x,x')=\langle \nabla_\theta f_\theta(x),\nabla_\theta f_\theta(x')\rangle .
\]
In the infinite-width limit, many architectures induce deterministic kernels, and squared-loss training is governed by kernel gradient flow or, with ridge regularization, kernel ridge regression. Given a training Gram matrix \(K\) and labels $y$, kernel ridge regression has fitted training values
\[
\hat y = K(K+\lambda I)^{-1}y,
\]
and test prediction \(k_*^\top(K+\lambda I)^{-1}y\). Thus the eigenspaces of \(K\) determine the learning bias: large-eigenvalue target components are fit more strongly than small-eigenvalue components.

\textbf{Graph NTKs:} Graph NTKs arise from message-passing graph neural networks \citep{du2019graph,krishnagopal2023gntk}. A typical message-passing layer has the form
$
H^{(\ell+1)}=\sigma\!\left(\widetilde A H^{(\ell)}W^{(\ell)}\right),$
where $\widetilde{A}$ is a graph propagation operator, $H^{(\ell)}$ is the hidden representation at layer $\ell$, $W^{(\ell)}$ is a trainable weight matrix, and $\sigma$ is a nonlinear activation function. 
In the infinite-width limit, the associated covariance and tangent kernels obey deterministic recursions determined by $\widetilde{A}$ and $\sigma$. Since their propagation operators are graph-derived, they depend only on the \(1\)-skeleton. Therefore, if two simplicial complexes have the same graph and the same input features but different filled triangles, a graph NTK using only the 1-skeleton assigns them the same kernel. This motivates a topological NTK that incorporates upper Hodge structure.

\begin{figure}[t]
    \centering
    \includegraphics[width=0.65\linewidth]{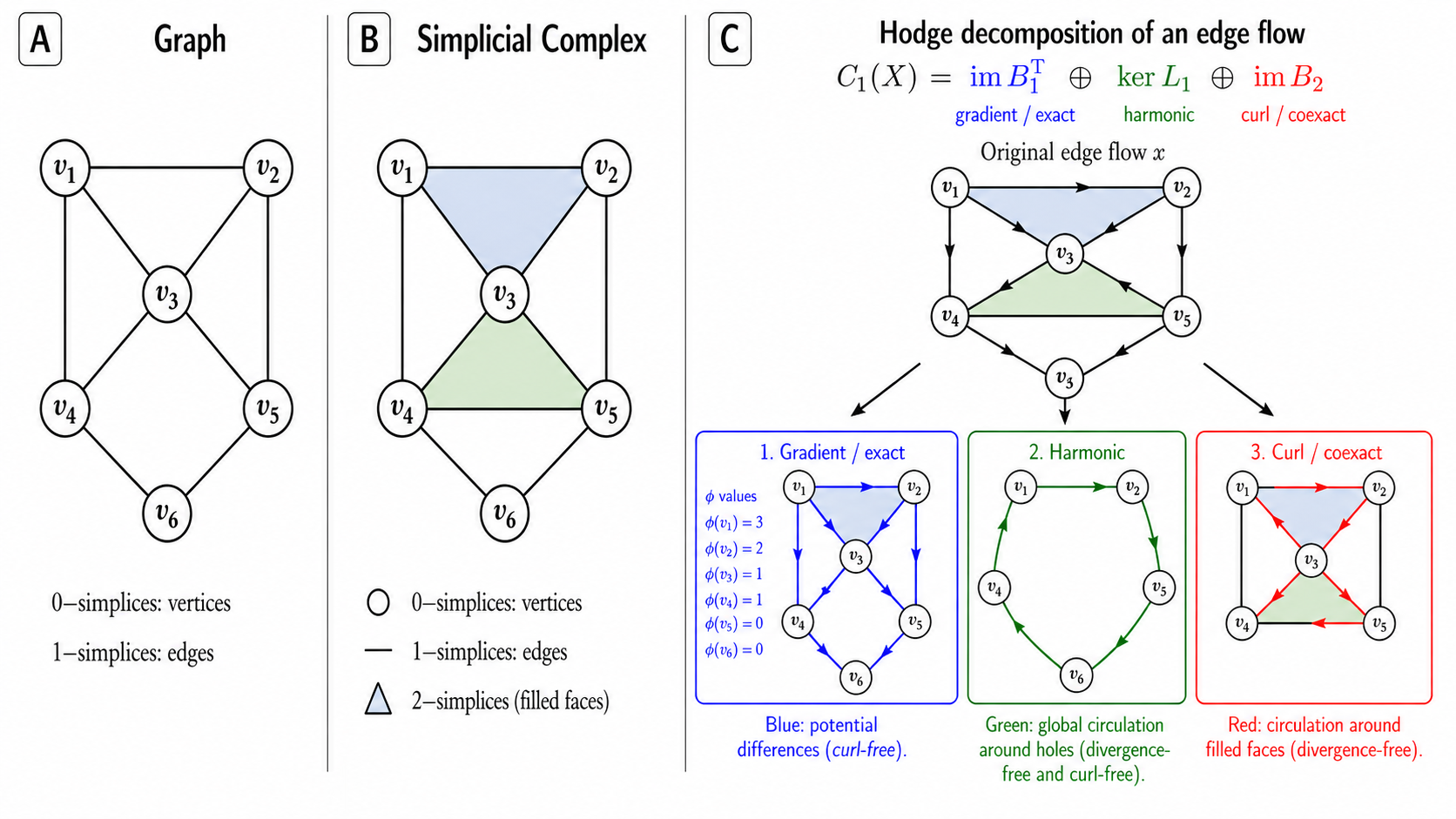}
\caption{
    \textbf{Graphs, simplicial complexes, and Hodge decomposition.}
    A simplicial complex augments a graph with filled higher-order interactions. Edge signals decompose into exact, harmonic, and coexact components, corresponding to gradient-like flows, global cycles modulo filled boundaries, and local circulations.
    }
\label{fig:hodge_decomposition}
\end{figure}

\section{Topological Neural Tangent Kernels}

\subsection{Simplicial message passing}

Graph message passing propagates information through shared vertices. 
On simplicial complexes, interactions also occur through higher-order structure such as filled triangles. 
We model this by combining two propagation mechanisms for edge signals: a lower channel capturing graph-like interactions (edge signals propagating through a shared node) and an upper channel capturing higher-order interactions (edge signals propagating via being faces of a shared triangle). This lower--upper propagation viewpoint is closely related to simplicial neural
networks and Hodge-based convolutional filters
\cite{ebli2020simplicial,bodnar2021weisfeiler,yang2022simplicialfilters}.

Let $X$ be an oriented simplicial complex and let
\[
H^{(0)} \in \mathbb{R}^{|E| \times d}
\]
denote $d$-dimensional edge features. Define the Hodge propagation operator
\[
P_{\gamma,\alpha,\beta}
=
\gamma I+\alpha L_\downarrow+\beta L_\uparrow,
\]
where 
\[
L_{\downarrow}=B_1^\top B_1,
\qquad
L_{\uparrow}=B_2B_2^\top .
\]
The lower term couples edges sharing vertices; the upper term couples edges that co-bound a filled triangle.
The residual term acts as an edge self-loop and preserves harmonic signals,
since \(L_\downarrow h=L_\uparrow h=0\) for \(h\in\ker L_1\).

A simplicial message-passing layer, written in the ordering used by the kernel recursion below, is
\[
H^{(\ell+1)}
=
P_{\gamma,\alpha,\beta}\,\sigma\!\left(H^{(\ell)}W^{(\ell)}\right),
\]
where $W^{(\ell)}\in\mathbb R^{m_\ell\times m_{\ell+1}}$ has entries \(N(0,1/m_\ell)\) at initialization and $\sigma$ is applied entrywise. The lower, upper, and full variants use
\(\gamma I+\alpha L_\downarrow\), \(\gamma I+\beta L_\uparrow\), and
\(\gamma I+\alpha L_\downarrow+\beta L_\uparrow\), respectively. Appendix~\ref{app:ntk_derivation} derives the corresponding infinite-width recursion. A pre-activation convention, \(\sigma(PH^{(\ell)}W^{(\ell)})\), gives a related kernel with \(P\) inside the activation covariance map. The same construction can be formulated for \(k\)-cochains using \(L_k^\downarrow,L_k^\uparrow\).

\subsection{The Topological Neural Tangent Kernel}

The neural tangent kernel captures the infinite-width limit of a neural network. 
For simplicial message passing, this yields a kernel that depends on both graph structure and higher-order topology through the lower and upper Hodge operators.

\begin{definition}[Topological Neural Tangent Kernel]
\label{def:topontk}
Let $X$ and $Y$ be simplicial complexes with edge features $H_X^{(0)}, H_Y^{(0)}$. Define
\[
P_X=\gamma I+\alpha L_\downarrow(X)+\beta L_\uparrow(X),
\qquad
P_Y=\gamma I+\alpha L_\downarrow(Y)+\beta L_\uparrow(Y).
\]
Initialize
\[
\Sigma^{(0)}(X,Y)=\frac{1}{d}H_X^{(0)}(H_Y^{(0)})^\top.
\]
We set $\Theta^{(0)}(X,Y)=\Sigma^{(0)}(X,Y)$. For $\ell\geq 0$,
\[
\Sigma^{(\ell+1)}(X,Y)
=
P_X\,\Phi(\Sigma^{(\ell)}(X,Y))P_Y^\top,
\]
\[
\Theta^{(\ell+1)}(X,Y)
=
P_X\!\left[
\Theta^{(\ell)}(X,Y)\odot
\dot\Phi(\Sigma^{(\ell)}(X,Y))
\right]P_Y^\top
+
\Sigma^{(\ell+1)}(X,Y).
\]

\end{definition}
For a nonlinearity $\sigma$, the notation
\(\Phi(\Sigma^{(\ell)}(X,Y))\) abbreviates the standard dual-activation
covariance map evaluated entrywise using the cross-covariance
\(\Sigma^{(\ell)}(X,Y)\) together with the corresponding self-covariance
diagonals from \(\Sigma^{(\ell)}(X,X)\) and \(\Sigma^{(\ell)}(Y,Y)\).
The derivative map \(\dot\Phi\) is defined analogously. Thus, entrywise,
\[
\Phi(\Sigma^{(\ell)})_{ij}
=
\mathbb E[\sigma(g_i^X)\sigma(g_j^Y)],
\qquad
\dot\Phi(\Sigma^{(\ell)})_{ij}
=
\mathbb E[\sigma'(g_i^X)\sigma'(g_j^Y)],
\]
where \((g_i^X,g_j^Y)\) is centered Gaussian with variances
\(\Sigma^{(\ell)}_{ii}(X,X)\), \(\Sigma^{(\ell)}_{jj}(Y,Y)\), and covariance
\(\Sigma^{(\ell)}_{ij}(X,Y)\).

For graph-level or complex-level prediction, we use the pooled scalar kernel
\[
K(X,Y)=\mathbf{1}_{E_X}^{\top}\Theta^{(L)}(X,Y)\mathbf{1}_{E_Y},
\]
optionally normalized by $|E_X|^{-1/2}|E_Y|^{-1/2}$.
For a fixed complex \(X\), we write
$
K_X:=\Theta^{(L)}(X,X),
$
an edge-by-edge kernel matrix acting on edge signals \(x\in C^1(X)\cong \mathbb R^{|E|}\).

Unlike graph NTKs using only the same 1-skeleton and features, TopoNTK incorporates $L_{\uparrow}$ and is therefore sensitive to filled triangles.

\section{Theoretical Properties}
\label{sec:theory}

\subsection{Expressivity beyond graph kernels}
\label{sec:expressivity}
Graph kernels using only pairwise structure cannot distinguish simplicial
complexes that share the same $1$-skeleton and input features. In contrast,
TopoNTK incorporates the upper Hodge Laplacian and can detect differences in
filled-simplex structure.

\begin{proposition}[Filled-simplex sensitivity]
\label{prop:filled-simplex-sensitivity}
Let \(X\) and \(X'\) be simplicial complexes with the same oriented
\(1\)-skeleton. Any graph NTK whose propagation and input features depend only
on the \(1\)-skeleton assigns the same kernel to \(X\) and \(X'\).

By contrast, let
\[
P_X=\gamma I+\alpha L_\downarrow+\beta L_\uparrow(X),
\qquad
P_{X'}=\gamma I+\alpha L_\downarrow+\beta L_\uparrow(X').
\]
If \(\beta>0\), \(L_\uparrow(X)\neq L_\uparrow(X')\), and there exists a
positive semidefinite edge covariance \(C\) such that
\[
P_X C P_X^\top \neq P_{X'} C P_{X'}^\top,
\]
then there are edge features, realizing \(C=\frac{1}{d}HH^\top\), for which the
edge-level TopoNTK differs after one layer. A pooled scalar kernel separates the
pair whenever the pooling functional is nonzero on this edge-level kernel
difference.
\end{proposition}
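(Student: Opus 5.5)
The first claim is essentially definitional, and I will dispatch it directly. A graph NTK is a deterministic recursion: it is built from a propagation operator $\widetilde A$ and input features, composed with the dual-activation maps $\Phi,\dot\Phi$. If $\widetilde A$ and $H^{(0)}$ are functions of the oriented $1$-skeleton alone, then $X$ and $X'$ feed identical inputs into that recursion. Induction on the layer index $\ell$ then gives equal $\Sigma^{(\ell)}$ and $\Theta^{(\ell)}$ for every $\ell$, and hence equal pooled kernels. Note that $L_\downarrow=B_1^\top B_1$ depends only on the oriented $1$-skeleton, so it is shared by $X$ and $X'$. This is why the statement writes a common $L_\downarrow$ in $P_X$ and $P_{X'}$.

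For the separation claim, the plan is to realize the given covariance by features and then read off the one-layer kernel. First, factor the positive semidefinite matrix $C\in\mathbb R^{|E|\times|E|}$ as $C=\frac1d HH^\top$, for example via $H=\sqrt d\, C^{1/2}$ with $d=|E|$, or via a Cholesky-type factor. Use the same features $H$ on both complexes, which is legitimate because they share an edge set. Then $\Sigma^{(0)}(X,X)=\Sigma^{(0)}(X',X')=C$.

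The next step is to unfold one layer of Definition~\ref{def:topontk}. This gives $\Sigma^{(1)}=P\,\Phi(C)\,P^\top$ and $\Theta^{(1)}=P\,[C\odot\dot\Phi(C)]\,P^\top+P\,\Phi(C)\,P^\top$. Here $\Phi(C)$ and $\dot\Phi(C)$ are the same matrices for both complexes, since they depend only on $C$.

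The main obstacle is that the hypothesis concerns $P C P^\top$, whereas the kernel involves $P M P^\top$ with $M=\Phi(C)+C\odot\dot\Phi(C)$. I would handle this in one of two ways.

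\emph{Option 1: reduce to a statement about $M$.} The hypothesis is used only through the existence of a PSD matrix on which the two sandwich maps $S\mapsto P_XSP_X^\top$ and $S\mapsto P_{X'}SP_{X'}^\top$ differ. It suffices to find PSD features whose induced $M$ is one such matrix. Concretely, I would take $C=tC_0$ and use the fact that the dual maps are continuous with a nondegenerate structure. For the identity activation (the linear case), $M=2C$, and the hypothesis applies directly. For ReLU, homogeneity gives $\Phi(tC_0)=t\Phi(C_0)$ and $\dot\Phi(tC_0)=\dot\Phi(C_0)$, so $M$ scales linearly in $t$. One then argues by a genericity or perturbation step: the set of PSD $S$ with $P_XSP_X^\top=P_{X'}SP_{X'}^\top$ is the intersection of a linear subspace with the PSD cone. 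That subspace is proper by hypothesis, and it cannot contain the whole PSD cone because the cone spans all symmetric matrices. So a generic $C$, and generically its image $M$, avoids it. The fact that $M$ has full-dimensional image would come from, say, the diagonal-dominant structure $M\approx c_1 C+\text{diag}$.

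\emph{Option 2: read the hypothesis more literally.} Interpret $C$ as the post-activation covariance entering the layer, which is how the derivation in Appendix~\ref{app:ntk_derivation} orders $P$ outside $\sigma$. Then $\Sigma^{(1)}$ differs immediately, and $\Theta^{(1)}$ differs because it contains $\Sigma^{(1)}$ plus a second sandwiched PSD term. Any remaining cancellation between the two terms would be ruled out by the same genericity argument.

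I would commit to Option 1, presenting the linear-span argument as the core lemma.

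Finally, the pooled statement is immediate. $K(X,Y)=\mathbf 1^\top\Theta^{(1)}\mathbf 1$ is a linear functional of the edge-level kernel, so $K(X,X)-K(X',X')$ equals that functional applied to $\Theta^{(1)}(X,X)-\Theta^{(1)}(X',X')$. It is therefore nonzero exactly when the functional is nonzero on this difference, which is the stated condition.
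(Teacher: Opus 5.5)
Your graph-NTK induction, your pooling step, and your identity-activation remark together reproduce the paper's proof. That remark is $M=\Phi(C)+C\odot\dot\Phi(C)=2C$, hence $\Theta^{(1)}_X-\Theta^{(1)}_{X'}=2(P_XCP_X^\top-P_{X'}CP_{X'}^\top)\neq 0$. The appendix argues only in the linear-activation case. Your computation is what justifies its remark that the covariance enters ``as an additive contribution'': the tangent term is the same sandwich, so it cannot cancel.

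The gap is in the nonlinear route you commit to as the core lemma.

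\begin{itemize}
\item The scaling $C=tC_0$ is vacuous for ReLU. By your own homogeneity remark $M(tC_0)=tM(C_0)$, so the one-layer difference is just multiplied by $t$.
\item Passing to a generic $C$ abandons the given $C$, whereas the statement asks for features realizing that $C$.
\item The decisive claim, that the image of $C\mapsto M(C)$ escapes $\mathcal N=\{S:P_XSP_X^\top=P_{X'}SP_{X'}^\top\}$, is only gestured at.
\end{itemize}

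For general $\sigma$ the same-$C$ conclusion is in fact false, so no genericity argument can deliver it. Take one triangle with boundary vector $b=(1,1,-1)$, filled in $X$ and empty in $X'$. Use $\sigma(x)=x^2$ and $C=cc^\top$ with $c=(1,1,\sqrt2)$. Isserlis gives $M=7\,(c\odot c)(c\odot c)^\top$, and $b^\top(c\odot c)=0$, so $\Theta^{(1)}_X=\Theta^{(1)}_{X'}$. Yet $b^\top c\neq 0$ gives $P_XCP_X^\top\neq P_{X'}CP_{X'}^\top$.

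For ReLU you can keep the given $C$, in two steps.
\begin{itemize}
\item Each $\sigma'(g_i)=\mathbf 1[g_i>0]$ has mean $\tfrac12$, so $\dot\Phi(C)\succeq\tfrac14\mathbf 1\mathbf 1^\top$. The Schur product theorem then gives $M(C)\succeq\tfrac14 C$, hence $\operatorname{range}C\subseteq\operatorname{range}M(C)$.
\item Suppose $P_X,P_{X'}$ are positive definite (e.g.\ $\gamma>0$, $\alpha\ge 0$). Then a PSD $S$ lies in $\mathcal N$ iff $(L_\uparrow(X)-L_\uparrow(X'))S=0$. To see this, write $S=TST^\top$ with $T=P_X^{-1}P_{X'}$, which has positive eigenvalues. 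A PSD fixed point must have range in the eigenvalue-one eigenspace, which is $\ker(L_\uparrow(X)-L_\uparrow(X'))$.
\end{itemize}
Combining the two steps, $C\notin\mathcal N$ forces $M(C)\notin\mathcal N$.
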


The proof is given in Appendix~\ref{app:expressivity}. The condition says only that the changed filled simplices must be visible to the
chosen edge-feature covariance and pooling. Thus the result is an expressivity
separation: graph-only kernels are invariant to triangle changes on a fixed
\(1\)-skeleton, whereas TopoNTK can respond through \(L_\uparrow\).

\subsection{Hodge-aligned structure}
\label{sec:hodge}

Edge signals admit a decomposition into exact, harmonic, and coexact components.
The lower and upper Hodge operators do not mix these types, which is why the TopoNTK learns edge-signal components in coordinates adapted to the Hodge decomposition.

\begin{proposition}[Hodge preservation of propagation]
\label{prop:hodge}
The residual Hodge propagation operator
\(P_{\gamma,\alpha,\beta}=\gamma I+\alpha L_\downarrow+\beta L_\uparrow\)
preserves the Hodge decomposition:
\[
P_{\gamma,\alpha,\beta}(\mathcal E)\subseteq\mathcal E,\quad
P_{\gamma,\alpha,\beta}(\mathcal H)\subseteq\mathcal H,\quad
P_{\gamma,\alpha,\beta}(\mathcal C)\subseteq\mathcal C.
\]
Moreover, \(L_\uparrow\) vanishes on \(\mathcal E\), \(L_\downarrow\) vanishes
on \(\mathcal C\), both vanish on \(\mathcal H\), and
\(P_{\gamma,\alpha,\beta}h=\gamma h\) for \(h\in\mathcal H\).
\end{proposition}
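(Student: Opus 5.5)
The plan is to verify the annihilation statements first and then derive invariance from them. Throughout, \(\mathcal E=\operatorname{im}B_1^\top\), \(\mathcal H=\ker L_1\) and \(\mathcal C=\operatorname{im}B_2\). The only inputs are the chain-complex identity \(B_1B_2=0\) (equivalently \(B_2^\top B_1^\top=0\)) and the standard characterization \(\ker L_1=\ker B_1\cap\ker B_2^\top\).

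\textbf{Annihilation.} The three vanishing statements come out as follows.
\begin{itemize}[leftmargin=*]
\item On \(\mathcal E\): for \(x=B_1^\top u\), we get \(L_\uparrow x=B_2(B_2^\top B_1^\top u)=0\).
\item On \(\mathcal C\): for \(x=B_2 v\), we get \(L_\downarrow x=B_1^\top(B_1B_2 v)=0\).
\item On \(\mathcal H\): if \(L_1h=0\), then
\(0=h^\top L_1 h=\|B_1h\|^2+\|B_2^\top h\|^2\),
so \(B_1h=0\) and \(B_2^\top h=0\). Hence \(L_\downarrow h=B_1^\top B_1 h=0\) and \(L_\uparrow h=B_2B_2^\top h=0\).
\end{itemize}
This last step is the only one needing an argument beyond direct substitution, and it is short. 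It immediately gives \(P_{\gamma,\alpha,\beta}h=\gamma h+\alpha\cdot 0+\beta\cdot 0=\gamma h\), so \(\mathcal H\) is invariant.

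\textbf{Invariance of \(\mathcal E\) and \(\mathcal C\).} For \(x=B_1^\top u\in\mathcal E\),
\[
P_{\gamma,\alpha,\beta}x=\gamma B_1^\top u+\alpha B_1^\top(B_1B_1^\top u)=B_1^\top\bigl(\gamma u+\alpha B_1B_1^\top u\bigr)\in\mathcal E,
\]
where the upper term has already been dropped because \(L_\uparrow\) vanishes on \(\mathcal E\). Symmetrically, for \(x=B_2v\in\mathcal C\),
\[
P_{\gamma,\alpha,\beta}x=B_2\bigl(\gamma v+\beta B_2^\top B_2 v\bigr)\in\mathcal C,
\]
using that \(L_\downarrow\) vanishes on \(\mathcal C\).

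No step is a genuine obstacle; the argument is purely algebraic. The orthogonality of the decomposition, recalled in the background, is not even needed for invariance. It only shows that \(P_{\gamma,\alpha,\beta}\) is block diagonal in Hodge coordinates: on \(\mathcal E\) it acts as \(\gamma+\alpha L_\downarrow\), on \(\mathcal C\) as \(\gamma+\beta L_\uparrow\), and on \(\mathcal H\) as \(\gamma\). This is presumably the form used later.
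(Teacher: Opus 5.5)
Your proof is correct and follows the same route as the paper's: you use \(B_1B_2=0\) to kill \(L_\uparrow\) on \(\mathcal E\) and \(L_\downarrow\) on \(\mathcal C\), show both Laplacian terms vanish on \(\mathcal H\), and then read off invariance. You also make explicit two steps the paper leaves implicit: the quadratic-form argument that gives \(B_1h=0\) and \(B_2^\top h=0\) from \(L_1h=0\), and the factorizations \(P_{\gamma,\alpha,\beta}B_1^\top u=B_1^\top(\gamma u+\alpha B_1B_1^\top u)\) and \(P_{\gamma,\alpha,\beta}B_2v=B_2(\gamma v+\beta B_2^\top B_2v)\), which show that the surviving Laplacian term keeps \(\mathcal E\) and \(\mathcal C\) invariant.
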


\begin{proof}
If \(x=B_1^\top u\in\mathcal E\), then \(L_\uparrow x=B_2(B_1B_2)^\top u=0\). If \(x=B_2v\in\mathcal C\), then \(L_\downarrow x=B_1^\top B_1B_2v=0\). If \(h\in\mathcal H\), then \(B_1h=0\) and \(B_2^\top h=0\), so both Hodge Laplacian terms vanish. The identity term preserves every subspace, giving the claim.
\end{proof}

\paragraph{Kernel-level Hodge compatibility.}
Proposition~\ref{prop:hodge} is the main structural fact: the Hodge propagator itself does not mix exact, harmonic, and coexact signals. Kernel-level invariance additionally requires the covariance and tangent-kernel updates to preserve the same blocks. Useful sufficient conditions are: (i) linear activations together with Hodge-block-compatible initial covariance; (ii) Hodge-filter kernels whose covariances and tangent kernels are matrix functions or finite sums of products of \(L_\downarrow\) and \(L_\uparrow\); or (iii) any recursion for which \(\Sigma^{(0)}\), \(\Phi(\Sigma^{(\ell)})\), and \(\dot\Phi(\Sigma^{(\ell)})\) remain block diagonal in
\[
C^1(X)=\mathcal E\oplus\mathcal H\oplus\mathcal C.
\]
For nonlinear ReLU NTK recursions with entrywise Hadamard products, this compatibility is not automatic, because the Hodge decomposition is generally not aligned with the standard edge basis. Thus the nonlinear TopoNTK should be understood as a Hodge-aligned architectural bias, while exact block invariance holds under the compatibility condition below.

\begin{proposition}[Sufficient condition for kernel-level Hodge compatibility]
\label{prop:hodge_compatibility}
Assume \(\Sigma^{(0)}(X,X)\) is block diagonal with respect to
\(C^1(X)=\mathcal E\oplus\mathcal H\oplus\mathcal C\), and assume that, for each
finite layer, \(\Phi(\Sigma^{(\ell)}(X,X))\) and
\(\dot\Phi(\Sigma^{(\ell)}(X,X))\) are block diagonal in the same decomposition.
Then the within-complex TopoNTK preserves the Hodge decomposition:
\[
K_X(\mathcal{E})\subseteq \mathcal{E},\qquad
K_X(\mathcal{H})\subseteq \mathcal{H},\qquad
K_X(\mathcal{C})\subseteq \mathcal{C}.
\]
The sufficient conditions above are examples where these hypotheses hold.
\end{proposition}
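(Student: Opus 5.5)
The plan is an induction on the layer index $\ell$. The inductive hypothesis is that both $\Sigma^{(\ell)}(X,X)$ and $\Theta^{(\ell)}(X,X)$ are block diagonal with respect to $C^1(X)=\mathcal E\oplus\mathcal H\oplus\mathcal C$. For a symmetric matrix $M$ on $C^1(X)$, this is equivalent to $M$ mapping each of $\mathcal E,\mathcal H,\mathcal C$ into itself. Equivalently, $M$ commutes with the three orthogonal projectors $\Pi_{\mathcal E},\Pi_{\mathcal H},\Pi_{\mathcal C}$. Since $K_X=\Theta^{(L)}(X,X)$, the claim is the case $\ell=L$. The base case is immediate: $\Theta^{(0)}=\Sigma^{(0)}$, and $\Sigma^{(0)}$ is block diagonal by hypothesis.

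I will use two closure facts about the set $\mathcal B$ of operators that commute with all three projectors. First, $\mathcal B$ is closed under products and sums. Second, $P_X=P_{\gamma,\alpha,\beta}$ lies in $\mathcal B$ and is symmetric, since $L_\downarrow$ and $L_\uparrow$ are symmetric. Proposition~\ref{prop:hodge} shows $P_X$ maps each subspace into itself. Because the decomposition is orthogonal and $P_X$ is symmetric, $P_X$ commutes with each projector. In particular $P_X^\top=P_X$, so conjugation $M\mapsto P_X M P_X^\top$ preserves $\mathcal B$.

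For the inductive step, the covariance update gives $\Sigma^{(\ell+1)}=P_X\,\Phi(\Sigma^{(\ell)})\,P_X^\top$. This is block diagonal because $\Phi(\Sigma^{(\ell)}(X,X))$ is block diagonal by hypothesis, and conjugation by $P_X$ preserves $\mathcal B$. The tangent update has the two terms $P_X[\Theta^{(\ell)}\odot\dot\Phi(\Sigma^{(\ell)})]P_X^\top$ and $\Sigma^{(\ell+1)}$. The second term is already handled, and a sum of block-diagonal matrices is block diagonal.

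The main obstacle is the Hadamard product $\Theta^{(\ell)}\odot\dot\Phi(\Sigma^{(\ell)})$. As the paper notes, block diagonality in a non-coordinate decomposition is not generally preserved by entrywise products. The stated hypotheses control $\Phi$ and $\dot\Phi$, but not this product. I would first try to show that block-diagonal matrices are closed under $\odot$, but this is false in general. So I would make the hypothesis explicit by reading the compatibility assumption as covering the effective tangent factor: the bracketed matrix $\Theta^{(\ell)}\odot\dot\Phi(\Sigma^{(\ell)})$ is block diagonal.

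This reading is consistent with the listed sufficient conditions. In case (i), with linear activations, $\dot\Phi$ is the all-ones matrix, so the product is just $\Theta^{(\ell)}$, which is block diagonal by induction. In case (ii), the filter kernels are built only from products and sums of $L_\downarrow$ and $L_\uparrow$, so no Hadamard product arises and everything stays in $\mathcal B$. In case (iii), the assumption that the recursion remains block diagonal is taken to include the tangent factor.

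With this, the bracket lies in $\mathcal B$, conjugation by $P_X$ keeps it there, and adding $\Sigma^{(\ell+1)}$ closes the induction. Evaluating at $\ell=L$ gives $K_X(\mathcal E)\subseteq\mathcal E$, $K_X(\mathcal H)\subseteq\mathcal H$, and $K_X(\mathcal C)\subseteq\mathcal C$.
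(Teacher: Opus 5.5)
Your argument is the paper's proof: $P_X$ is block diagonal by Proposition~\ref{prop:hodge}, and induction over layers carries block diagonality of $\Sigma^{(\ell)}$ and $\Theta^{(\ell)}$ up to $K_X=\Theta^{(L)}(X,X)$. Your explicit assumption that $\Theta^{(\ell)}\odot\dot\Phi(\Sigma^{(\ell)})$ is block diagonal is what the paper's proof also uses, implicitly, when it says the tangent-kernel updates preserve block structure ``by assumption.'' Block diagonality of $\dot\Phi$ alone does not control the Hadamard product, and in case (i) $\dot\Phi$ is the all-ones matrix, which is block diagonal only when the all-ones edge vector lies in a single Hodge summand, so your reformulated hypothesis is the right way to make the statement rigorous.
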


See Appendix~\ref{app:hodge_invariance_NTK}. The proposition records when propagation-level Hodge preservation lifts to the full kernel; the experiments below test whether the standard ReLU recursion retains this Hodge-aligned bias in practice.

\subsection{Spectral learning dynamics}
\label{sec:spectral}

Kernel methods learn along eigenvectors, with rates determined by eigenvalues. When the Hodge-compatibility assumptions hold, \(K_X\) is block diagonal with respect to the Hodge decomposition, so eigenvectors may be chosen within the exact, harmonic, and coexact subspaces. More generally, one can project eigenvectors onto these subspaces to diagnose how the nonlinear TopoNTK distributes Hodge components across its spectrum. Components aligned with larger kernel
eigenvalues are learned more quickly, while components aligned with smaller
eigenvalues are learned more slowly.

\begin{theorem}[Spectral learning under kernel gradient flow]
\label{th:spectral}
Let \(K_X\) be the TopoNTK, and let
$
K_X u_j=\kappa_j u_j
$
be an orthonormal eigenbasis of \(K_X\). Suppose kernel gradient flow is initialized at
\(f_0=0\) and trained to fit a target edge signal \(y\). Then the prediction at time \(t\) is
\[
f_t
=
\sum_j
\left(1-e^{-\kappa_j t}\right)
\langle y,u_j\rangle u_j .
\]
Thus each kernel eigenmode is learned independently, and the learning rate of mode \(u_j\)
is determined by its eigenvalue \(\kappa_j\).
\end{theorem}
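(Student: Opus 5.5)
The plan is to write kernel gradient flow for squared loss on the edge set of the fixed complex \(X\) as a linear ODE and solve it in the eigenbasis of \(K_X\). Concretely, I take the loss to be \(\tfrac12\|f-y\|^2\) over the edge coordinates \(C^1(X)\cong\mathbb R^{|E|}\). In the infinite-width regime the NTK stays fixed during training, so the function-space dynamics are
\[
\frac{d f_t}{dt} = -K_X\,(f_t - y),\qquad f_0=0 .
\]
This is the standard consequence of the NTK definition: \(\dot f = \nabla_\theta f\,\dot\theta\) with \(\dot\theta=-\nabla_\theta f^\top (f-y)\). I will state it as the defining dynamics of kernel gradient flow rather than rederive the infinite-width limit. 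Any fixed learning-rate constant is absorbed into \(t\).

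First I check that \(K_X\) is symmetric positive semidefinite, so that an orthonormal eigenbasis with real \(\kappa_j\ge 0\) exists. The statement already posits such a basis, so I only need to note that \(\Theta^{(L)}(X,X)\) is symmetric. This follows by induction from Definition~\ref{def:topontk}. \(\Sigma^{(0)}(X,X)=\tfrac1d HH^\top\) is symmetric PSD. \(\Phi\) and \(\dot\Phi\) evaluated at a PSD self-covariance are PSD, since they are Gram matrices of \(\sigma(g_i)\) and \(\sigma'(g_i)\). Hadamard products of PSD matrices are PSD by the Schur product theorem. Congruence by \(P_X\) preserves PSD. Sums of PSD matrices are PSD. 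Nonnegativity of the \(\kappa_j\) is only needed to read \(1-e^{-\kappa_j t}\) as monotone convergence; the formula itself holds for any symmetric \(K_X\).

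Next I change variables to the error \(e_t=f_t-y\), which satisfies \(\dot e_t=-K_X e_t\) with \(e_0=-y\). Hence \(e_t=-e^{-tK_X}y\). Expanding in the eigenbasis gives \(e^{-tK_X}=\sum_j e^{-\kappa_j t}u_ju_j^\top\), and writing \(y=\sum_j\langle y,u_j\rangle u_j\) yields
\[
f_t = y - e^{-tK_X}y=\sum_j (1-e^{-\kappa_j t})\langle y,u_j\rangle u_j .
\]
Equivalently, projecting the ODE onto each \(u_j\) gives decoupled scalar equations \(\tfrac{d}{dt}\langle f_t,u_j\rangle=-\kappa_j(\langle f_t,u_j\rangle-\langle y,u_j\rangle)\). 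This decoupling is exactly the claimed independence of eigenmodes.

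There is no real technical obstacle here. The only delicate point is interpretive: the constancy of the kernel along training is an infinite-width idealization. I would state the theorem as a statement about kernel gradient flow with the fixed kernel \(K_X\), which is what the hypothesis says. Finally, I would remark that zero-eigenvalue modes (\(\kappa_j=0\)) stay at \(0\) for all time. When Proposition~\ref{prop:hodge_compatibility} applies, the \(u_j\) can be chosen inside \(\mathcal E\), \(\mathcal H\), or \(\mathcal C\), so the formula reads off learning rates per Hodge component.
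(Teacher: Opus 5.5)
Your proposal is correct and follows essentially the same route as the paper. Both pass to the error $e_t=f_t-y$, which solves $\dot e_t=-K_Xe_t$ with $e_0=-y$, diagonalize in the orthonormal eigenbasis, and recombine using $y=\sum_j\langle y,u_j\rangle u_j$. Your Schur-product induction showing that $K_X$ is symmetric positive semidefinite verifies something the paper only asserts, and your matrix-exponential formulation is equivalent to the paper's coefficientwise scalar ODEs.
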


The proof is the standard eigenbasis solution of kernel gradient flow and is
given in Appendix~\ref{app:spectral}.

\subsection{Stability under higher-order perturbations}
\label{sec:stability}
If triangle structure is perturbed on a fixed oriented edge set, the upper Hodge
Laplacian changes and the finite-depth TopoNTK changes continuously with it.
\begin{theorem}
\label{th:stability}
Let \(X,X'\) be complexes on the same oriented edge set, differing only in their
triangle sets. Assume the finite-depth recursion is uniformly bounded and that
\(\Phi,\dot\Phi\) are Lipschitz on the relevant bounded covariance set. For ReLU,
assume additionally that covariance diagonals stay bounded away from zero. Then
there are constants \(C_K,C_{\rm pred}\), depending on the finite-depth bounds and
on \(L,\alpha,\beta,\gamma\), such that
\[
\|K_X-K_{X'}\|_F \le C_K\|L_\uparrow-L_\uparrow'\|_F .
\]
Moreover, for kernel ridge regression with ridge parameter \(\lambda>0\),
\[
\|\hat{y}_X - \hat{y}_{X'}\|
\le \frac{C_{\rm pred}}{\lambda}\|L_\uparrow-L_\uparrow'\|_F\|y\|.
\]
If \(B_2,B_2'\) are represented over a common oriented candidate-triangle set and
\(\|B_2\|,\|B'_2\|\le M\), then
\(\|L_\uparrow-L_\uparrow'\|\le 2M\|B_2-B_2'\|\).
\end{theorem}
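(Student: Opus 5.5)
The proof is an induction on depth through the recursion of Definition~\ref{def:topontk}, tracking the differences
\[
\delta_\Sigma^{(\ell)}=\|\Sigma^{(\ell)}(X,X)-\Sigma^{(\ell)}(X',X')\|_F,
\qquad
\delta_\Theta^{(\ell)}=\|\Theta^{(\ell)}(X,X)-\Theta^{(\ell)}(X',X')\|_F .
\]
The two complexes share the oriented edge set, so $B_1$ and $L_\downarrow$ coincide, and the input features (hence $\Sigma^{(0)}=\Theta^{(0)}$) are taken to be the same. Therefore
\[
\Delta := P_X-P_{X'}=\beta(L_\uparrow-L_\uparrow'),
\]
and $\delta_\Sigma^{(0)}=\delta_\Theta^{(0)}=0$. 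Write $\varepsilon=\|L_\uparrow-L_\uparrow'\|_F$. Let $R$ be the uniform bound on the covariance and tangent iterates, $p$ a bound on $\|P_X\|,\|P_{X'}\|$ (finite since the triangle sets are finite, or folded into the assumed finite-depth bounds), $\Lambda$ the Lipschitz constant of $\Phi$ and $\dot\Phi$ on the bounded set, and $D$ the sup-bound for $\dot\Phi$ there.

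\textbf{Covariance step.} Split the difference telescopically:
\[
P_X\Phi(\Sigma)P_X^\top-P_{X'}\Phi(\Sigma')P_{X'}^\top
=\Delta\Phi(\Sigma)P_X^\top+P_{X'}\bigl(\Phi(\Sigma)-\Phi(\Sigma')\bigr)P_X^\top+P_{X'}\Phi(\Sigma')\Delta^\top .
\]
Using $\|AB\|_F\le\|A\|\,\|B\|_F$, this gives
\[
\delta_\Sigma^{(\ell+1)}\le 2p\beta R'\varepsilon+p^2\Lambda\,\delta_\Sigma^{(\ell)},
\]
where $R'$ bounds $\Phi$ on the set.

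Here lies the main subtlety. $\Phi$ depends on the diagonals of the self-covariances as well as the cross entries, so the Lipschitz hypothesis must be read jointly in all these arguments. For ReLU, the arc-cosine map is Lipschitz only away from zero variance, which is exactly why the diagonals are assumed bounded away from zero. I would state that the Lipschitz assumption is taken in this joint, Frobenius sense.

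\textbf{Tangent step.} The same telescoping applies to $P[\Theta\odot\dot\Phi(\Sigma)]P^\top$. I would use $\|A\odot B\|_F\le\|A\|_{\max}\|B\|_F\le\|A\|_F\|B\|_F$ together with
\[
\Theta\odot\dot\Phi-\Theta'\odot\dot\Phi'=(\Theta-\Theta')\odot\dot\Phi+\Theta'\odot(\dot\Phi-\dot\Phi').
\]
This yields
\[
\delta_\Theta^{(\ell+1)}\le c_1\varepsilon+c_2\,\delta_\Theta^{(\ell)}+c_3\,\delta_\Sigma^{(\ell)}+\delta_\Sigma^{(\ell+1)},
\]
with constants built from $p,\beta,R,D,\Lambda$. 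Unrolling this linear recursion over $L$ layers from zero initial differences gives $\delta_\Theta^{(L)}\le C_K\varepsilon$, with $C_K$ depending only on $L,\alpha,\beta,\gamma$ and the bounds.

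\textbf{Prediction bound.} With $K=K_X$ and $K'=K_{X'}$ PSD, I use the resolvent identity
\[
K(K+\lambda I)^{-1}=I-\lambda(K+\lambda I)^{-1},
\]
so that
\[
\hat y_X-\hat y_{X'}=\lambda\bigl[(K'+\lambda I)^{-1}-(K+\lambda I)^{-1}\bigr]y=\lambda(K'+\lambda I)^{-1}(K-K')(K+\lambda I)^{-1}y .
\]
Each resolvent has norm at most $1/\lambda$, so $\|\hat y_X-\hat y_{X'}\|\le \lambda^{-1}C_K\varepsilon\|y\|$, and one may take $C_{\rm pred}=C_K$.

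\textbf{Final claim.} Write
\[
B_2B_2^\top-B_2'B_2'^\top=(B_2-B_2')B_2^\top+B_2'(B_2-B_2')^\top ,
\]
which bounds the difference by $2M\|B_2-B_2'\|$ in operator norm. In Frobenius norm it becomes $2M\|B_2-B_2'\|_F$, using the mixed-norm inequality.
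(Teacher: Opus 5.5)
Your proposal is correct and follows the paper's proof essentially step for step. Both use the same three-term telescoping of $P_X\Phi(\Sigma)P_X^\top-P_{X'}\Phi(\Sigma')P_{X'}^\top$ with $P_X-P_{X'}=\beta(L_\uparrow-L_\uparrow')$ and induction over finite depth, the resolvent identity with $\|(K+\lambda I)^{-1}\|\le 1/\lambda$ for the ridge bound, and the add-and-subtract factorization of $B_2B_2^\top-B_2'B_2'^\top$. Your explicit Hadamard estimate $\|A\odot B\|_F\le\|A\|_{\max}\|B\|_F$ and your Frobenius-versus-operator-norm bookkeeping (including the implicit $\|K_X-K_{X'}\|_{\rm op}\le\|K_X-K_{X'}\|_F$ in the ridge step) fill in what the paper only summarizes as ``another induction gives.''
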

The first bound is stated in Frobenius norm; the ridge bound uses operator-norm
resolvent estimates together with \(\|A\|_{\rm op}\le \|A\|_F\). See
Appendix~\ref{app:stability}.

\section{Experiments}
\label{sec:experiments}
We evaluate TopoNTK on synthetic topology-sensitive tasks and DBLP coauthorship
prediction. Baselines are Graph NTK, Lower TopoNTK, Upper TopoNTK, and Full
TopoNTK. Unless stated otherwise, we use kernel ridge regression with ReLU
activation covariance maps, depth $L=2$, $\gamma=0.5$, $\alpha=\beta=1$, scalar
spectral normalization of Hodge propagators, and trace normalization of the final
edge kernel. In fixed-complex edge-signal experiments, we use the architecture
operator induced by the recursion with $\Sigma^{(0)}=I$ and evaluate
$k(x,x')=x^\top K_{\rm arch}x'$.

\subsection{Higher-order prediction tasks}
\paragraph{Triangle-count prediction.}
We first test whether upper-dimensional information improves prediction of a
simplex-dependent target. The fixed 1-skeleton has $n=30$ vertices with cycle
edges $(i,i+1)$ and second-neighbor chords $(i,i+2)$; the $n$ triples
$(i,i+1,i+2)$ are candidate 2-simplices. For each density $q$, each candidate is
filled independently with probability $q$, and the target is the raw number of
filled triangles. We use 200 samples per density, 5 repetitions, a 70/30
train/test split, and $\lambda=10^{-4}$. Error bars are standard errors.

\paragraph{DBLP simplicial closure.}
We evaluate future three-author collaboration prediction on the ScHoLP
coauth-DBLP temporal network \citep{benson2018simplicial}, constructed from DBLP
metadata \citep{dblp}. After sorting by time, the first 70\% of simplices form
the historical complex and the remaining 30\% define future collaborations.
Candidates are historical closed triads, i.e. triples whose three pairwise
coauthorship edges appear in the historical \(1\)-skeleton but whose
three-author simplex is absent from history. Positives are candidates that
appear as three-author simplices in the future window; negatives are candidates
absent from both history and future.  We use local
ego complexes around candidate triples. The graph baseline is a node-level graph
NTK on the ego 1-skeleton, while TopoNTK variants are edge-level kernels using
$P=\gamma I+\alpha L_\downarrow+\beta L_\uparrow$ with scalar spectral
normalization of $L_\downarrow$ and $L_\uparrow$. We use 5 runs with at most
50,000 simplices, 120 positives and 120 negatives, ego size 10, a 70/30 split,
depth $L=2$, $\lambda=10^{-3}$, maximum simplex size 10, maximum group size 8,
and at most 200 local triangles.

\begin{figure}[t]
\centering
\begin{minipage}[t]{0.49\linewidth}
    \centering
    \includegraphics[width=\linewidth]{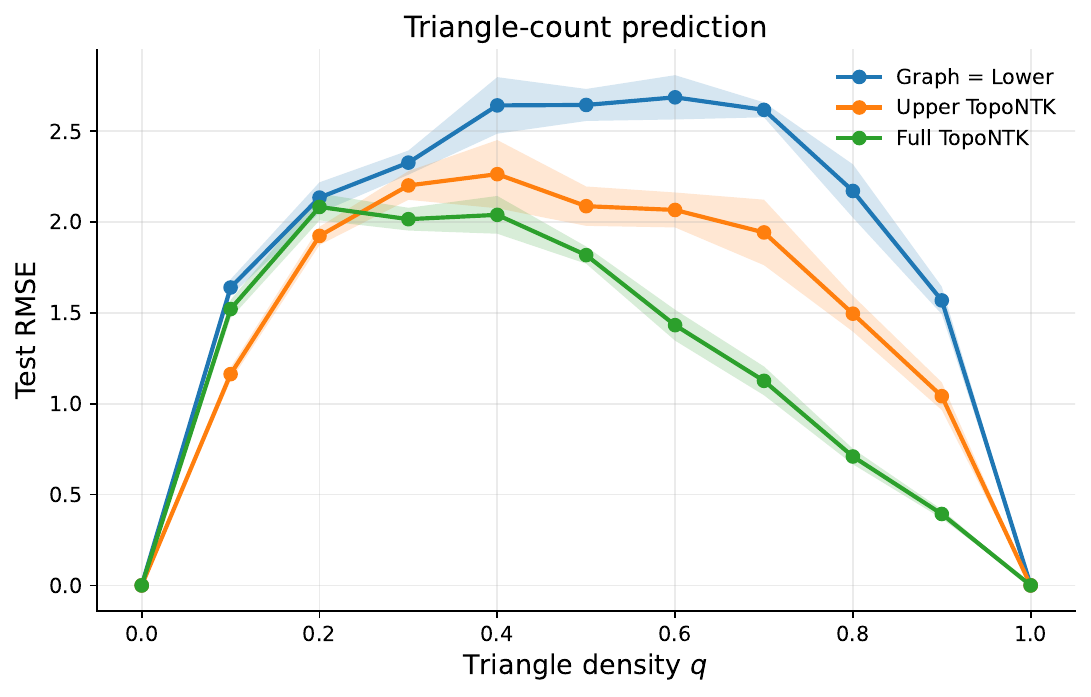}
\end{minipage}
\hfill
\begin{minipage}[t]{0.49\linewidth}
    \centering
    \includegraphics[width=\linewidth]{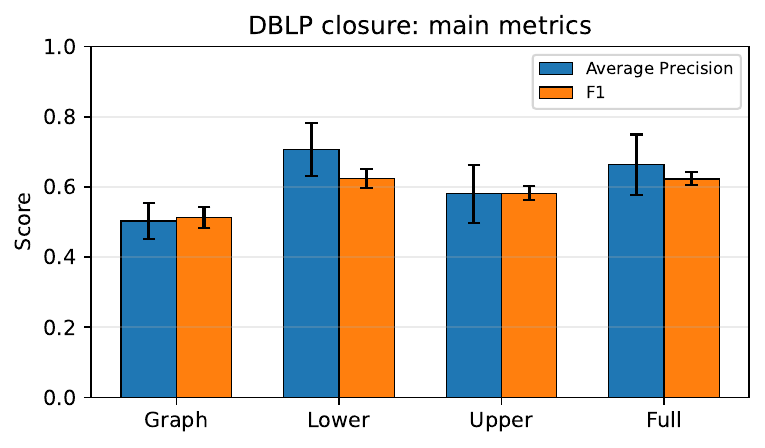}
\end{minipage}
\caption{
\textbf{Higher-order prediction in synthetic and real data.}
\textbf{Left:} Controlled triangle-count sensitivity test. Error bars denote standard error over \(5\) independent runs. Kernels using \(L_{\uparrow}\) outperform the graph/lower baseline, which is sample-invariant in this fixed-skeleton task.
\textbf{Right:} Future three-author collaboration closure on DBLP. We report average precision and F1, with error bars denoting standard error over \(5\) runs. Upper and full TopoNTK-style variants achieve the strongest performance, indicating that filled-triangle collaboration history contains predictive information beyond the pairwise graph.
}
\label{fig:higher-order-prediction}
\end{figure}

Figure~\ref{fig:higher-order-prediction} shows the same behavior in controlled and real settings. In triangle-count regression, Upper TopoNTK and Full TopoNTK achieve substantially lower error than graph-only/lower-only baselines, as expected for a task affected by filled \(2\)-simplices. $q=0,1$ are deterministic and therefore trivial. This experiment is intended as a controlled sensitivity diagnostic rather than a
difficult predictive benchmark: the target is deliberately a function of the
filled \(2\)-simplices. The graph/lower kernel contains no sample-specific
higher-order information in this fixed-skeleton setting. Here, Graph NTK and Lower TopoNTK are identical, so we plot them as a single Graph = Lower curve.
On DBLP, the graph NTK and lower TopoNTK remain predictive, via average precision and F1, because pairwise coauthorship and shared-author structure still provide strong information about future collaborations. The gains from the upper and full channel reflect additional predictive signal from higher-order collaborations in this normalized local implementation. Note that full isn't always better than upper; lower channel may introduce pairwise noise for purely higher-order tasks.

\subsection{Hodge recovery and spectral diagnostic}
We next test whether the kernels align with the Hodge decomposition. For each
random complex, we sample an Erd\H{o}s--R\'enyi graph $G(n,p)$ with $n=20$,
$p=0.35$, fill each 3-clique with probability $q=0.4$, and sample unit-norm edge
signals by summing independent exact, harmonic, and coexact components. The
sample Gram matrix is $G_{ij}=x_i^\top K_{\rm arch}x_j$, and multi-output kernel
ridge regression recovers each target component: exact, harmonic, and coexact. We use 120 training signals, 60
test signals, 5 seeds, depth 2, and $\lambda=10^{-3}$.

For the spectral diagnostic, we compute eigenpairs
$K_{\rm full}u_j=\kappa_j u_j$ on an ER complex with $n=30$, $p=0.35$, $q=0.4$.
For a target $y=u_j$, Theorem~1 predicts
$\|f_t-u_j\|=e^{-\kappa_j t}\|u_j\|$; the diagnostic content is therefore the
placement of Hodge-labeled modes across the spectrum.

Figure~\ref{fig:hodge-spectral} shows that lower propagation improves exact-component recovery, while
upper propagation improves coexact-component recovery. Harmonic components lie
in $\ker L_1$ and are not directly amplified by either Hodge channel; they are
retained through the identity channel and learned according to their kernel
eigenvalues. The spectral-bias plot on the right visualizes the kernel-gradient-flow law for the finite-dimensional TopoNTK operator. The decay rate is determined analytically by \(\kappa_j\) so agreement with the diagonal is a sanity check, and the distribution of eigenvalues is the diagnostic result. Interestingly, in this random-complex example, harmonic modes concentrate at
smaller eigenvalues, indicating slower predicted learning of global topological
structure.

\begin{figure}[t]
\centering
\begin{minipage}[t]{0.49\linewidth}
    \centering    \includegraphics[width=\linewidth]{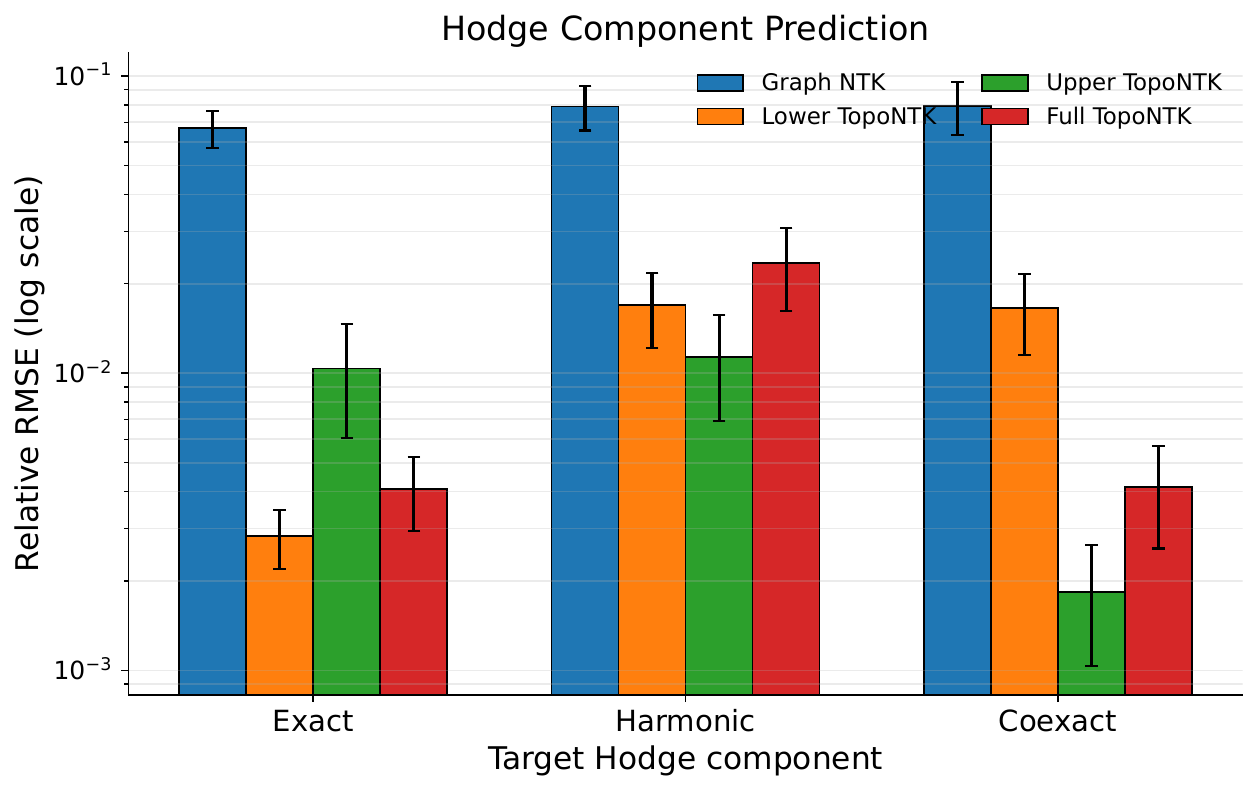}
\end{minipage}
\hfill
\begin{minipage}[t]{0.5\linewidth}
    \centering
    \includegraphics[width=\linewidth]{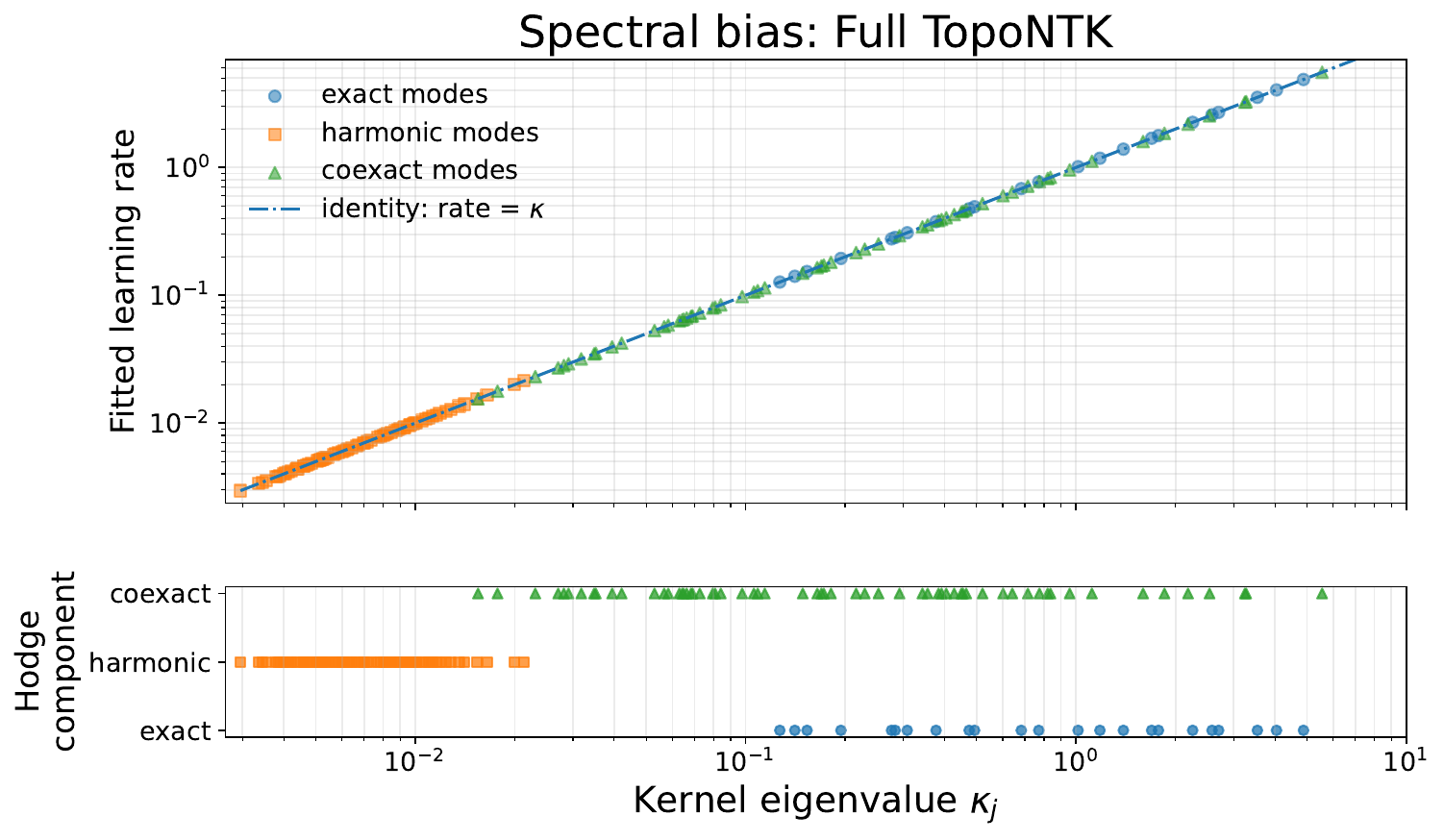}
\end{minipage}
\caption{
\textbf{Hodge-aligned recovery and spectral diagnostic.}
\textbf{Left:} Relative RMSE
for recovering exact, harmonic, and coexact edge-signal components. Lower
propagation improves exact-component recovery, while upper propagation improves
coexact-component recovery; harmonic components reflect global cycle structure.
Error bars denote standard error over \(5\) runs.
\textbf{Right:} Spectral diagnostic for the full TopoNTK. Each point is a kernel eigenmode, and the diagonal is the kernel-gradient-flow prediction from Theorem~\ref{th:spectral}. The main information is the placement of Hodge-labeled modes across the spectrum: in this random-complex example, harmonic modes concentrate at smaller eigenvalues and are predicted to learn more slowly.
}
\label{fig:hodge-spectral}
\end{figure}

\subsection{Stability under higher-order perturbations}

Finally, we test the stability result of Section~\ref{sec:stability}. Starting from a fixed \(1\)-skeleton $G(n,p)$ with \(n=30\), \(p=0.35\), and \(q=0.4\), we generate a complex \(X\) and construct perturbed complexes \(X'\) by independently flipping candidate triangles with probability \(\varepsilon\). Thus, the graph structure is unchanged while only the higher-order structure is corrupted. For each \(\varepsilon\), we use \(3\) independent perturbations per run, averaged over 5 runs. We use the full TopoNTK and compare ridge
parameters.

For each pair \((X,X')\), we compute
\[
\Delta_K
=
\frac{\|K_X-K_{X'}\|_F}{\|K_X\|_F},
\qquad
\Delta_L
=
\|L_{\uparrow}-L_{\uparrow}'\|_F.
\]
We also measure prediction stability under kernel ridge regression:
\[
\Delta_y
=
\frac{\|\hat y_X-\hat y_{X'}\|}{\|\hat y_X\|},
\qquad
\hat y_X
=
K_X(K_X+\lambda I)^{-1}y.
\]
In the code, \(y\) is a fixed mean-centered, unit-norm mixed edge signal on the base complex. The normalization in \(\Delta_y\) is empirical; the theorem gives an absolute bound proportional to \(\|y\|\). We vary both the triangle flip probability \(\varepsilon\) and the ridge parameter \(\lambda\).

\begin{figure}[t]
    \centering
    \includegraphics[width=\linewidth]{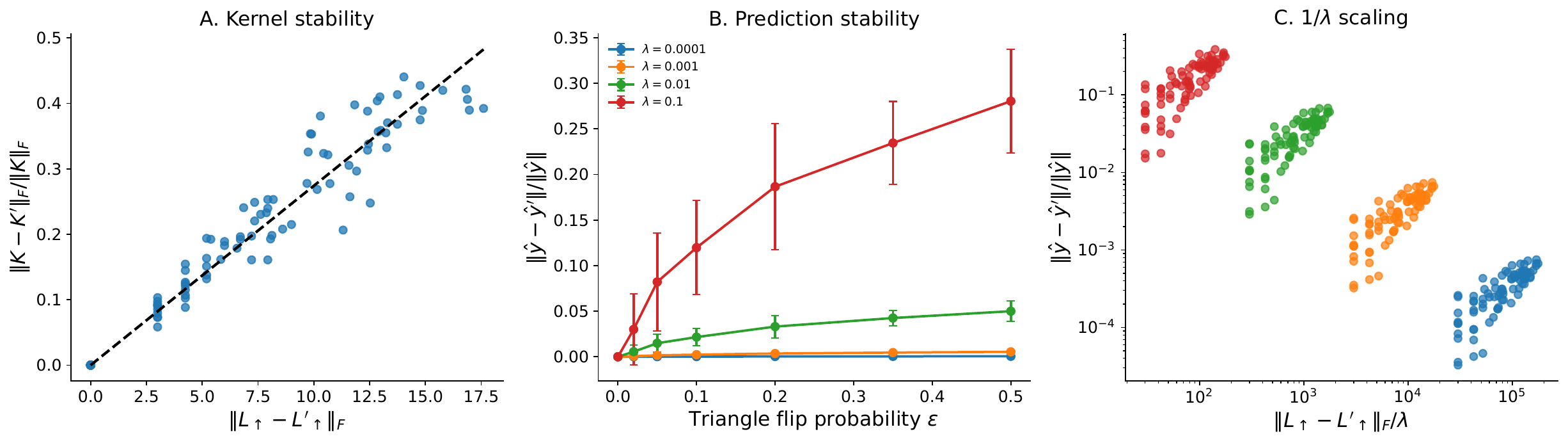}
    \caption{
    \textbf{Stability under triangle perturbations.}
    (A) Relative kernel perturbation versus change in the upper Laplacian. The approximately linear trend supports Lipschitz dependence on higher-order structure.
    (B) Prediction perturbation versus triangle flip probability \(\varepsilon\) for different ridge parameters \(\lambda\); stronger regularization improves stability.
    (C) Relative prediction errors plotted against \(\|L_{\uparrow}-L_{\uparrow}'\|_F/\lambda\), consistent with the predicted \(1/\lambda\)-type scaling from Section~\ref{sec:stability}.
    }
    \label{fig:stability-topontk}
\end{figure}

Figure~\ref{fig:stability-topontk} shows that TopoNTK varies stably under triangle perturbations. Kernel changes scale approximately linearly with changes in \(L_{\uparrow}\), while prediction changes grow smoothly with perturbation strength and decrease with stronger regularization. This supports the theory and is practically useful for error estimation and generalization when higher-order relations are noisy or partially observed.

\section{Conclusion}
\label{sec:conclusion}
We introduced TopoNTK, the infinite-width kernel induced by Hodge message passing on edge features. By incorporating lower and upper Hodge interactions, TopoNTK captures higher-order structure invisible to graph-only methods and gives a spectral view of exact, harmonic, and coexact components. Exact Hodge preservation holds for the propagation operator and for Hodge-compatible kernels; the standard nonlinear ReLU TopoNTK instead provides a strong architectural alignment. This is the main message: the upper channel gives expressivity beyond the 1-skeleton, the residual channel preserves harmonic information at propagation level, and the resulting spectra provide interpretable coordinates for relational learning with group interactions, cycles, and circulations.

\paragraph{Limitations and broader impact.} Exact kernel methods scale quadratically in sample size, and dense $n$-vertex complexes have $O(n^3)$ possible 2-simplices. The DBLP experiment therefore uses sampled local ego complexes and should be interpreted as evidence of a higher-order signal rather than a definitive large-scale benchmark. We compare kernel variants rather than trained finite-width SNN or MPSN models; scalable local-triangle approximations and finite-width comparisons are natural next steps. The method may benefit domains with group interactions, flows, or circulations, but in sensitive applications its increased expressivity may amplify biases present in data.

\paragraph{Code availability and compute.}
Code is available at \url{github.com/chimeraki/TopoNTK}. We used a workstation with Intel i7-10750H CPU, 16GB RAM, and
an NVIDIA GTX 1650 Ti GPU; runtimes were 1--5min for synthetic experiments
and 10--15min for DBLP.

\bibliography{references}

@article{horak2013spectra,
  title={Spectra of Combinatorial Laplace Operators on Simplicial Complexes},
  author={Horak, Danijela and Jost, J{\"u}rgen},
  journal={Advances in Mathematics},
  volume={244},
  pages={303--336},
  year={2013}
}

@article{lim2020hodge,
  title={Hodge Laplacians on Graphs},
  author={Lim, Lek-Heng},
  journal={SIAM Review},
  volume={62},
  number={3},
  pages={685--715},
  year={2020},
  publisher={SIAM},
  doi={10.1137/18M1223101}
}

@inproceedings{jacot2018ntk,
  title={Neural Tangent Kernel: Convergence and Generalization in Neural Networks},
  author={Jacot, Arthur and Gabriel, Franck and Hongler, Cl{\'e}ment},
  booktitle={Advances in Neural Information Processing Systems},
  year={2018}
}

@inproceedings{du2019graph,
  title={Graph Neural Tangent Kernel: Fusing Graph Neural Networks with Graph Kernels},
  author={Du, Simon S. and Hou, Karthik and Salakhutdinov, Ruslan and Poczos, Barnabas and Wang, Jianfeng},
  booktitle={Advances in Neural Information Processing Systems},
  year={2019}
}

@inproceedings{krishnagopal2023gntk,
  title={Graph Neural Tangent Kernel: Convergence on Large Graphs},
  author={Krishnagopal, Sanjukta and Ruiz, Luana},
  booktitle={Proceedings of the 40th International Conference on Machine Learning},
  pages={17827--17841},
  year={2023},
  organization={PMLR}
}

@article{krishnagopal2021spectral,
  title={Spectral Detection of Simplicial Communities via Hodge Laplacians},
  author={Krishnagopal, Sanjukta and Bianconi, Ginestra},
  journal={Physical Review E},
  volume={104},
  number={6},
  pages={064303},
  year={2021},
  publisher={American Physical Society}
}

@article{krishnagopal2022mountaineering,
  title={The Collective vs Individual Nature of Mountaineering: A Network and Simplicial Approach},
  author={Krishnagopal, Sanjukta},
  journal={Applied Network Science},
  volume={7},
  number={62},
  year={2022},
  publisher={Springer}
}

@article{battiston2020networks,
  title={Networks Beyond Pairwise Interactions: Structure and Dynamics},
  author={Battiston, Federico and Cencetti, Giulia and Iacopini, Iacopo and Latora, Vito and Lucas, Maxime and Patania, Alice and Young, Jean-Gabriel and Petri, Giovanni},
  journal={Physics Reports},
  volume={874},
  pages={1--92},
  year={2020},
  publisher={Elsevier}
}

@inproceedings{bodnar2021weisfeiler,
  title={Weisfeiler and Lehman Go Topological: Message Passing Simplicial Networks},
  author={Bodnar, Cristian and Frasca, Fabrizio and Wang, Yu Guang and Otter, Nina and Mont{\'u}far, Guido F. and Li{\`o}, Pietro and Bronstein, Michael M.},
  booktitle={Proceedings of the 38th International Conference on Machine Learning},
  series={Proceedings of Machine Learning Research},
  volume={139},
  pages={1026--1037},
  year={2021},
  publisher={PMLR}
}

@inproceedings{ebli2020simplicial,
  title={Simplicial Neural Networks},
  author={Ebli, Stefania and Defferrard, Micha{\"e}l and Spreemann, Gard},
  booktitle={NeurIPS Workshop on Topological Data Analysis and Beyond},
  year={2020}
}

@inproceedings{papamarkou2024topological,
  title={Position: Topological Deep Learning is the New Frontier for Relational Learning},
  author={Papamarkou, Theodore and Birdal, Tolga and Bronstein, Michael M. and Carlsson, Gunnar E. and Curry, Justin and Gao, Yue and Hajij, Mustafa and Kwitt, Roland and Lio, Pietro and Di Lorenzo, Paolo and others},
  booktitle={Proceedings of the 41st International Conference on Machine Learning},
  series={Proceedings of Machine Learning Research},
  volume={235},
  year={2024},
  publisher={PMLR}
}

@article{schaub2021signal,
  title={Signal Processing on Higher-Order Networks: Livin' on the Edge ... and Beyond},
  author={Schaub, Michael T. and Zhu, Yu and Seby, Jean-Baptiste and Roddenberry, T. Mitchell and Segarra, Santiago},
  journal={Signal Processing},
  volume={187},
  pages={108149},
  year={2021},
  doi={10.1016/j.sigpro.2021.108149}
}

@article{bick2023higher,
  title={What Are Higher-Order Networks?},
  author={Bick, Christian and Gross, Elizabeth and Harrington, Heather A. and Schaub, Michael T.},
  journal={SIAM Review},
  volume={65},
  number={3},
  pages={686--731},
  year={2023},
  doi={10.1137/21M1414024}
}

@article{su2024hodge,
  title   = {Hodge Decomposition of Single-Cell {RNA} Velocity},
  author  = {Su, Zhe and Tong, Yiying and Wei, Guo-Wei},
  journal = {Journal of Chemical Information and Modeling},
  volume  = {64},
  number  = {8},
  pages   = {3558--3568},
  year    = {2024},
  doi     = {10.1021/acs.jcim.4c00132}
}

@article{hajij2022topological,
  title={Topological Deep Learning: Going Beyond Graph Data},
  author={Hajij, Mustafa and Zamzmi, Ghada and Papamarkou, Theodore and Miolane, Nina and Guzm{\'a}n-S{\'a}enz, Aldo and Ramamurthy, Karthikeyan Natesan and Birdal, Tolga and Dey, Tamal K. and Mukherjee, Soham and Samaga, Shreyas N. and Livesay, Neal and Walters, Robin and Rosen, Paul and Schaub, Michael T.},
  journal={arXiv preprint arXiv:2206.00606},
  year={2022}
}

@article{barbarossa2020topological,
  title={Topological Signal Processing over Simplicial Complexes},
  author={Barbarossa, Sergio and Sardellitti, Stefania},
  journal={IEEE Transactions on Signal Processing},
  volume={68},
  pages={2992--3007},
  year={2020},
  publisher={IEEE},
  doi={10.1109/TSP.2020.2981920}
}

@misc{dblp,
  author = {{dblp team}},
  title = {{dblp computer science bibliography}},
  howpublished = {\url{https://dblp.org}},
  note = {Metadata released under CC0 1.0 Public Domain Dedication, with ODC-BY 1.0 as a secondary license; accessed 2026-04-28}
}

@article{schaub2020random,
  title={Random Walks on Simplicial Complexes and the Normalized Hodge 1-Laplacian},
  author={Schaub, Michael T. and Benson, Austin R. and Horn, Paul and Lippner, Gabor and Jadbabaie, Ali},
  journal={SIAM Review},
  volume={62},
  number={2},
  pages={353--391},
  year={2020},
  publisher={SIAM},
  doi={10.1137/18M1201019}
}

@article{yang2022simplicialfilters,
  title={Simplicial Convolutional Filters},
  author={Yang, Maosheng and Isufi, Elvin and Schaub, Michael T. and Leus, Geert},
  journal={IEEE Transactions on Signal Processing},
  volume={70},
  pages={4633--4648},
  year={2022},
  publisher={IEEE},
  doi={10.1109/TSP.2022.3207045}
}

@article{benson2018simplicial,
  title={Simplicial Closure and Higher-order Link Prediction},
  author={Benson, Austin R. and Abebe, Rediet and Schaub, Michael T. and Jadbabaie, Ali and Kleinberg, Jon},
  journal={Proceedings of the National Academy of Sciences},
  volume={115},
  number={48},
  pages={E11221--E11230},
  year={2018},
  publisher={National Academy of Sciences},
  doi={10.1073/pnas.1800683115}
}
\bibliographystyle{plainnat}
\appendix

\section{Derivation of the TopoNTK Recursion}
\label{app:ntk_derivation}

We derive the recursion used in Definition~\ref{def:topontk} from a standard
infinite-width NTK limit. The derivation is for the post-activation propagation
convention
\[
H_X^{(\ell+1)}
=
P_X\,\sigma\!\left(H_X^{(\ell)}W^{(\ell)}\right),
\qquad
P_X=\gamma I+\alpha L_\downarrow(X)+\beta L_\uparrow(X),
\]
where \(H_X^{(0)}\in\mathbb R^{|E_X|\times d}\) is the input edge-feature matrix,
\(\sigma\) is applied entrywise, and the weights are independent across layers with
\[
W^{(0)}_{ab}\sim N(0,1/d),
\qquad
W^{(\ell)}_{ab}\sim N(0,1/m_\ell)
\quad\text{for }\ell\ge 1.
\]
For notational simplicity we write all hidden widths as \(m_\ell\), and take the
standard infinite-width limit \(m_1,\ldots,m_L\to\infty\). The deterministic
operators \(P_X\) are fixed with respect to the trainable weights.

To obtain a scalar-output NTK for each edge, append an independent linear readout
\[
f_X(i)
=
\frac{1}{\sqrt{m_L}}\sum_{r=1}^{m_L} a_r H^{(L)}_{X,ir},
\qquad
a_r\sim N(0,1),
\]
with the same readout weights used when comparing two inputs. The edge-by-edge
NTK is
\[
\Theta^{(L)}(X,Y)_{ij}
=
\lim_{m\to\infty}
\left\langle
\nabla_\theta f_X(i),\nabla_\theta f_Y(j)
\right\rangle ,
\]
where \(\theta\) contains all hidden-layer weights and the readout weights. This
is the matrix-valued kernel acting on edge outputs. Pooled graph- or
complex-level kernels are obtained from this edge-level kernel by the pooling
operation in Definition~\ref{def:topontk}.

We first compute the covariance recursion. Define the empirical layer covariance
\[
\Sigma_m^{(\ell)}(X,Y)
=
\frac{1}{m_\ell}H_X^{(\ell)}(H_Y^{(\ell)})^\top
\in\mathbb R^{|E_X|\times |E_Y|},
\]
with the convention
\[
\Sigma^{(0)}(X,Y)=\frac{1}{d}H_X^{(0)}(H_Y^{(0)})^\top .
\]
For fixed edge indices \(i\in E_X\), \(j\in E_Y\), and hidden coordinate \(r\),
the preactivations
\[
Z^{(\ell)}_{X,ir}
=
\sum_{a=1}^{m_\ell} H^{(\ell)}_{X,ia}W^{(\ell)}_{ar},
\qquad
Z^{(\ell)}_{Y,jr}
=
\sum_{a=1}^{m_\ell} H^{(\ell)}_{Y,ja}W^{(\ell)}_{ar}
\]
are conditionally centered Gaussian in the infinite-width limit, with covariance
\[
\mathbb E\!\left[
Z^{(\ell)}_{X,ir} Z^{(\ell)}_{Y,jr}
\,\middle|\,
H_X^{(\ell)},H_Y^{(\ell)}
\right]
=
\Sigma_m^{(\ell)}(X,Y)_{ij}.
\]
Their variances are the corresponding diagonal entries of
\(\Sigma_m^{(\ell)}(X,X)\) and \(\Sigma_m^{(\ell)}(Y,Y)\). Hence the
post-activation covariance before Hodge propagation is the dual activation map
\[
\Phi(\Sigma^{(\ell)}(X,Y))_{ij}
=
\mathbb E[\sigma(g_i^X)\sigma(g_j^Y)],
\]
where \((g_i^X,g_j^Y)\) is the centered Gaussian pair with these two variances
and this cross-covariance. Applying the deterministic propagators on the two
sides gives
\[
\Sigma^{(\ell+1)}(X,Y)
=
P_X\,\Phi(\Sigma^{(\ell)}(X,Y))\,P_Y^\top .
\]
This proves the covariance recursion.

We now derive the tangent-kernel recursion. Let
\(\Theta^{(\ell)}(X,Y)\) denote the infinite-width tangent kernel associated with
the network truncated at layer \(\ell\), before the final readout contribution at
the next layer. The usual NTK chain rule gives two contributions when passing
from layer \(\ell\) to layer \(\ell+1\).

First, gradients with respect to parameters from previous layers are multiplied
by the activation derivatives at the two inputs. In the infinite-width limit,
averaging over hidden channels replaces the product of derivatives by the
derivative covariance map
\[
\dot\Phi(\Sigma^{(\ell)}(X,Y))_{ij}
=
\mathbb E[\sigma'(g_i^X)\sigma'(g_j^Y)].
\]
Since the deterministic propagation \(P_X\) is applied after the activation, the
old tangent kernel is transformed as
\[
P_X\Bigl[
\Theta^{(\ell)}(X,Y)
\odot
\dot\Phi(\Sigma^{(\ell)}(X,Y))
\Bigr]P_Y^\top .
\]
Here \(\odot\) denotes the Hadamard product in the standard edge-coordinate basis.

Second, the parameters \(W^{(\ell)}\) of the new layer contribute the covariance
of the new hidden representation, which is exactly
\(\Sigma^{(\ell+1)}(X,Y)\). Equivalently, this is the standard new-layer
contribution in the fully-connected NTK recursion, followed on the two sides by
the deterministic propagators.

Combining the old-parameter and new-parameter contributions yields
\[
\Theta^{(\ell+1)}(X,Y)
=
P_X\Bigl[
\Theta^{(\ell)}(X,Y)
\odot
\dot\Phi(\Sigma^{(\ell)}(X,Y))
\Bigr]P_Y^\top
+
\Sigma^{(\ell+1)}(X,Y),
\]
initialized by
\[
\Theta^{(0)}(X,Y)=\Sigma^{(0)}(X,Y).
\]
Thus the infinite-width edge-level NTK satisfies exactly the recursion stated in
Definition~\ref{def:topontk}.

For ReLU, \(\Phi\) and \(\dot\Phi\) are the standard arc-cosine covariance maps,
with the derivative map understood on covariance pairs whose self-variances are
positive. The finite-depth stability result therefore assumes that the relevant
self-covariance diagonals stay bounded away from zero. The derivation above also
makes clear why the Hodge propagators \(P_X,P_Y\) appear outside the dual
activation maps for the post-activation convention. A pre-activation convention
\(\sigma(P_XH_X^{(\ell)}W^{(\ell)})\) would instead place the Hodge propagation
inside the covariance map and gives a different, though closely related, kernel.

\section{Proof of Proposition~\ref{prop:filled-simplex-sensitivity}}
\label{app:expressivity}
\begin{proof}
The graph-NTK statement follows immediately. If the two complexes have the same
oriented \(1\)-skeleton, then every graph-derived propagation operator is the
same. If the graph NTK uses only features determined by that \(1\)-skeleton,
then the input covariance is also the same. Therefore the graph NTK recursion
produces the same kernel for \(X\) and \(X'\).

For TopoNTK, the only difference between the two propagation operators is the
upper Hodge term:
\[
P_X-P_{X'}
=
\beta\bigl(L_\uparrow(X)-L_\uparrow(X')\bigr).
\]
Thus, if \(\beta>0\) and \(L_\uparrow(X)\neq L_\uparrow(X')\), then
\(P_X\neq P_{X'}\). In the linear-activation case, the first covariance update is
\[
\Sigma_X^{(1)}
=
P_X C P_X^\top,
\qquad
\Sigma_{X'}^{(1)}
=
P_{X'} C P_{X'}^\top,
\]
where \(C=\Sigma^{(0)}\) is the input edge-feature covariance. Hence, under the
stated condition,
\[
\Sigma_X^{(1)}\neq \Sigma_{X'}^{(1)}.
\]
Since the NTK recursion contains this covariance as an additive contribution,
the edge-level TopoNTK differs after one layer.

Finally, a pooled scalar kernel applies a linear pooling functional to the
edge-level kernel. Therefore edge-level separation implies pooled separation
whenever the pooling functional is nonzero on the resulting kernel difference.
For sum pooling this condition is
\[
\mathbf 1^\top
\bigl(\Theta_X^{(L)}-\Theta_{X'}^{(L)}\bigr)
\mathbf 1
\neq 0.
\]
\end{proof}

\section{Proof of Hodge-Compatible Invariance}
\label{app:hodge_invariance_NTK}
\begin{proof}[Proof of Proposition~\ref{prop:hodge_compatibility}]
By Proposition~\ref{prop:hodge}, the propagation operator
\(P_{\gamma,\alpha,\beta}\) is block diagonal with respect to
\[
C^1(X)=\mathcal{E}\oplus\mathcal{H}\oplus\mathcal{C}.
\]
By assumption, \(\Sigma^{(0)}(X,X)\) is block diagonal in this decomposition, and the finite-depth covariance and tangent-kernel updates preserve block diagonal structure. Inducting over layers gives that \(\Sigma^{(\ell)}\) and \(\Theta^{(\ell)}\) are block diagonal for all \(\ell\le L\). Hence \(K_X=\Theta^{(L)}(X,X)\) is block diagonal and maps each of \(\mathcal E\), \(\mathcal H\), and \(\mathcal C\) into itself.

For nonlinear NTK recursions involving entrywise Hadamard products, the preservation of Hodge block structure is an additional compatibility condition, not a generic consequence of orthogonal Hodge block diagonal structure. This is why the main text distinguishes exact Hodge preservation of the propagation operator from Hodge-aligned bias of the standard nonlinear TopoNTK.
\end{proof}

\section{Proof of Spectral Learning}
\label{app:spectral}

\begin{proof}[Proof of Theorem~\ref{th:spectral}]
Let $K_X$ be the within-complex TopoNTK. Since $K_X$ is symmetric positive
semidefinite, it admits an orthonormal eigendecomposition
\[
K_X u_j = \kappa_j u_j,
\qquad
\kappa_j \ge 0,
\]
with $\{u_j\}$ an orthonormal basis for the relevant edge-signal space.

Consider squared-loss kernel gradient flow with target $y$ and prediction
$f_t$:
\[
\frac{d f_t}{dt}
=
-K_X(f_t-y),
\qquad
f_0=0.
\]
Define the error
\[
e_t=f_t-y.
\]
Then
\[
\frac{d e_t}{dt}
=
\frac{d f_t}{dt}
=
-K_X e_t.
\]
Expand the error in the eigenbasis of $K_X$:
\[
e_t=\sum_j e_j(t)u_j,
\qquad
e_j(t)=\langle e_t,u_j\rangle .
\]
Using $K_Xu_j=\kappa_j u_j$, we obtain
\[
\frac{d e_t}{dt}
=
-\sum_j \kappa_j e_j(t)u_j.
\]
Equating coefficients in the orthonormal basis gives the scalar ODE
\[
\frac{d e_j(t)}{dt}
=
-\kappa_j e_j(t).
\]
Thus
\[
e_j(t)=e_j(0)e^{-\kappa_j t}.
\]
Since $f_0=0$, we have $e_0=-y$, and therefore
\[
e_j(0)
=
\langle e_0,u_j\rangle
=
-\langle y,u_j\rangle .
\]
Hence
\[
e_t
=
-\sum_j e^{-\kappa_j t}\langle y,u_j\rangle u_j.
\]
Because $f_t=y+e_t$, and
\[
y=\sum_j \langle y,u_j\rangle u_j,
\]
we obtain
\[
f_t
=
\sum_j
\left(1-e^{-\kappa_j t}\right)
\langle y,u_j\rangle u_j.
\]

If the Hodge-compatibility assumptions of Proposition~\ref{prop:hodge_compatibility}
hold, then $K_X$ is block diagonal with respect to
\[
C^1(X)=\mathcal{E}\oplus\mathcal{H}\oplus\mathcal{C}.
\]
Therefore each block admits an orthonormal eigenbasis, and the eigenvectors
$\{u_j\}$ may be chosen within the exact, harmonic, and coexact subspaces. In
that case, the expansion above shows that learning proceeds independently along
Hodge-aligned kernel eigenmodes, with mode $u_j$ learned at rate $\kappa_j$.
\end{proof}

\section{Proof of the Stability Theorem}
\label{app:stability}
We prove the stability result in Theorem~\ref{th:stability}. 
\begin{proof}
Let $X$ and $X'$ be simplicial complexes with the same $1$-skeleton and different triangle sets. 
Since the $1$-skeleton is fixed, the lower Hodge Laplacian is unchanged:
\[
L_{\downarrow}(X)=L_{\downarrow}(X').
\]
Thus all variation in the TopoNTK comes from the upper Hodge Laplacian
\[
L_{\uparrow}(X)=B_2B_2^\top.
\]

Let
\[
\Delta B_2 := B_2-B_2',
\qquad
\Delta L_{\uparrow}:=L_{\uparrow}(X)-L_{\uparrow}(X').
\]
Then
\[
\Delta L_{\uparrow}
=
B_2B_2^\top-B_2'B_2'^{\top}.
\]
Adding and subtracting $B_2'B_2^\top$ gives
\[
\Delta L_{\uparrow}
=
(B_2-B_2')B_2^\top
+
B_2'(B_2-B_2')^\top.
\]
Therefore, applying the triangle inequality
\[
\|\Delta L_{\uparrow}\|
\leq
\|B_2-B_2'\|\,\|B_2\|
+
\|B_2'\|\,\|B_2-B_2'\|.
\]
Hence
\[
\|\Delta L_{\uparrow}\|
\leq
(\|B_2\|+\|B_2'\|)
\|B_2-B_2'\|.
\]
Assuming $\|B_2\|,\|B_2'\|\leq M$, we obtain
\[
\|\Delta L_{\uparrow}\|
\leq
2M\|B_2-B_2'\|.
\]

Now define the propagation operators
\[
P_X=\gamma I+\alpha L_\downarrow(X)+\beta L_\uparrow(X),\qquad
P_{X'}=\gamma I+\alpha L_\downarrow(X')+\beta L_\uparrow(X').
\]
The residual term cancels and the lower term is unchanged because the
1-skeleton is fixed, so
\[
P_X-P_{X'}
=
\beta\bigl(L_{\uparrow}(X)-L_{\uparrow}(X')\bigr).
\]
Thus
\[
\|P_X-P_{X'}\|
\leq
\beta\|L_\uparrow-L_\uparrow'\|,
\]
and, if \(\|B_2\|,\|B_2'\|\le M\), also \(\|P_X-P_{X'}\|\le 2\beta M\|B_2-B_2'\|\).

We next show that the TopoNTK recursion is Lipschitz in the propagation operator. 
Assume that all covariance matrices remain in a bounded set and that the activation covariance maps $\Phi$ and $\dot{\Phi}$ are Lipschitz on this set:
\[
\|\Phi(A)-\Phi(B)\|\leq L_{\Phi}\|A-B\|,
\qquad
\|\dot{\Phi}(A)-\dot{\Phi}(B)\|\leq L_{\dot{\Phi}}\|A-B\|.
\]
Also assume
\[
\|P_X\|,\|P_{X'}\|\leq R.
\]

Let $\Sigma_X^{(\ell)}$ and $\Sigma_{X'}^{(\ell)}$ denote the depth-$\ell$ covariance matrices for $X$ and $X'$. 
The recursion is
\[
\Sigma_X^{(\ell+1)}
=
P_X\Phi(\Sigma_X^{(\ell)})P_X^\top,
\qquad
\Sigma_{X'}^{(\ell+1)}
=
P_{X'}\Phi(\Sigma_{X'}^{(\ell)})P_{X'}^\top.
\]
Subtracting gives
\[
\Sigma_X^{(\ell+1)}-\Sigma_{X'}^{(\ell+1)}
=
P_X\Phi(\Sigma_X^{(\ell)})P_X^\top
-
P_{X'}\Phi(\Sigma_{X'}^{(\ell)})P_{X'}^\top.
\]
Add and subtract intermediate terms:
\[
\begin{aligned}
\Sigma_X^{(\ell+1)}-\Sigma_{X'}^{(\ell+1)}
={}&
(P_X-P_{X'})\Phi(\Sigma_X^{(\ell)})P_X^\top \\
&+
P_{X'}\bigl(\Phi(\Sigma_X^{(\ell)})-\Phi(\Sigma_{X'}^{(\ell)})\bigr)P_X^\top \\
&+
P_{X'}\Phi(\Sigma_{X'}^{(\ell)})(P_X-P_{X'})^\top .
\end{aligned}
\]
Taking norms and using boundedness,
\[
\|\Sigma_X^{(\ell+1)}-\Sigma_{X'}^{(\ell+1)}\|
\leq
C_{\ell}
\|P_X-P_{X'}\|
+
R^2L_{\Phi}
\|\Sigma_X^{(\ell)}-\Sigma_{X'}^{(\ell)}\|.
\]
By induction over finite depth $L$, there exists a constant $C_{\Sigma,L}$ such that
\[
\|\Sigma_X^{(L)}-\Sigma_{X'}^{(L)}\|
\leq
C_{\Sigma,L}\|P_X-P_{X'}\|.
\]

The tangent kernel recursion is
\[
\Theta_X^{(\ell+1)}
=
P_X\left[\Theta_X^{(\ell)}\odot\dot\Phi(\Sigma_X^{(\ell)})\right]P_X^\top
+
\Sigma_X^{(\ell+1)}.
\]
Using the same Lipschitz and boundedness assumptions, and the covariance bound above, another induction gives
\[
\|\Theta_X^{(L)}-\Theta_{X'}^{(L)}\|
\leq
C_{\Theta,L}\|P_X-P_{X'}\|.
\]
Since $K_X=\Theta_X^{(L)}$ and $K_{X'}=\Theta_{X'}^{(L)}$, we obtain
\[
\|K_X-K_{X'}\|
\leq
C_{\Theta,L}\|P_X-P_{X'}\|.
\]
Combining with the propagation bound,
\[
\|K_X-K_{X'}\|
\leq
C\|L_\uparrow-L_\uparrow'\|,
\]
with the displayed \(B_2\) bound as a corollary under \(\|B_2\|,\|B_2'\|\le M\).

It remains to prove the prediction stability bound. 
Kernel ridge regression gives
\[
\widehat{y}_X
=
K_X(K_X+\lambda I)^{-1}y.
\]
Using the identity
\[
K(K+\lambda I)^{-1}
=
I-\lambda(K+\lambda I)^{-1},
\]
we have
\[
\widehat{y}_X-\widehat{y}_{X'}
=
\lambda
\left[
(K_{X'}+\lambda I)^{-1}
-
(K_X+\lambda I)^{-1}
\right]y.
\]
By the resolvent identity,
\[
(K_{X'}+\lambda I)^{-1}
-
(K_X+\lambda I)^{-1}
=
(K_{X'}+\lambda I)^{-1}
(K_X-K_{X'})
(K_X+\lambda I)^{-1}.
\]
Therefore,
\[
\|\widehat{y}_X-\widehat{y}_{X'}\|
\leq
\lambda
\|(K_{X'}+\lambda I)^{-1}\|
\|K_X-K_{X'}\|
\|(K_X+\lambda I)^{-1}\|
\|y\|.
\]
Since $K_X$ and $K_{X'}$ are positive semidefinite,
\[
\|(K_X+\lambda I)^{-1}\|\leq \frac{1}{\lambda},
\qquad
\|(K_{X'}+\lambda I)^{-1}\|\leq \frac{1}{\lambda}.
\]
Thus
\[
\|\widehat{y}_X-\widehat{y}_{X'}\|
\leq
\frac{1}{\lambda}
\|K_X-K_{X'}\|\|y\|.
\]
Using the kernel stability bound,
\[
\|\widehat{y}_X-\widehat{y}_{X'}\|
\leq
\frac{C}{\lambda}
\|L_\uparrow-L_\uparrow'\|\|y\|,
\]
again with the \(B_2\) version following from \(\|L_\uparrow-L_\uparrow'\|\le 2M\|B_2-B_2'\|\). This completes the proof.
    
\end{proof}

\end{document}